\newtheorem{theorem}{Theorem}
\newtheorem{proposition}[theorem]{Proposition}
  \theoremstyle{definition}
\newtheorem{example}{Example}
\icmltitlerunning{Clustering with Fast, Automated and Reproducible
assessment applied to longitudinal neural tracking}
\begin{document}

\twocolumn[
\icmltitle{C-FAR: a Clustering ensemble method with Fast, Automated and Reproducible
assessment applied to longitudinal neural tracking}




\begin{icmlauthorlist}
\icmlauthor{Hanlin Zhu}{rice}
\icmlauthor{Xue Li}{rice}
\icmlauthor{Liuyang Sun}{rice}
\icmlauthor{Fei He}{rice}
\icmlauthor{Zhengtuo Zhao}{rice}
\icmlauthor{Lan Luan}{rice}
\icmlauthor{Ngoc Mai Tran}{austin}
\icmlauthor{Chong Xie}{rice}
\end{icmlauthorlist}

\icmlaffiliation{rice}{Department of Electrical and Computer Engineering, Rice University , Houston TX 77251, USA}
\icmlaffiliation{austin}{Department of Mathematics, University of Texas at Austin, Austin TX 78712}

\icmlcorrespondingauthor{Ngoc M Tran}{ntran@math.utexas.edu}

\icmlkeywords{Machine Learning, ICML}

\vskip 0.3in
]




\begin{abstract}
Across many areas, from neural tracking to database entity resolution, manual assessment of clusters by human experts presents a bottleneck in rapid development of scalable and specialized clustering methods. To solve this problem we develop C-FAR, a novel method for Fast, Automated and Reproducible assessment of multiple hierarchical clustering algorithms simultaneously. Our algorithm takes any number of hierarchical clustering trees as input, then strategically queries pairs for human feedback, and outputs an optimal clustering among those nominated by these trees. While it is applicable to large dataset in any domain that utilizes pairwise comparisons for assessment, our flagship application is the cluster aggregation step in spike-sorting, the task of assigning waveforms (spikes) in recordings to neurons. On simulated data of 96 neurons under adverse conditions, including drifting and 25\% blackout, our algorithm produces near-perfect tracking relative to the ground truth. Our runtime scales linearly in the number of input trees, making it a competitive computational tool. These results indicate that C-FAR is highly suitable as a model selection and assessment tool in clustering tasks. 
\end{abstract}

\section{Introduction}
\label{introduction}

Over the past three decades, intensive research on developing novel clustering algorithms have resulted in a plethora of choices. However, in many applications, model selection is an arduous task performed by human experts: one needs to assess the quality of clusters through pairwise comparisons, then manually tune the algorithms' parameters. This is time consuming, subjective, and clearly do not scale. This presents a fundamental roadblock in efficient processing of massive datasets. This problem is common to text classification \cite{schutze2006performance}, entity resolution in databases \cite{wang2012crowder,vesdapunt2014crowdsourcing,gokhale2014corleone,mazumdar2017theoretical},
biomedical problems \cite{wiwie2015comparing} and a wide variety of crowd-sourcing tasks~\cite{mazumdar2017clustering}. 
The flagship example that ignited this project stems from spike sorting in neuroscience. This is the problem of assigning waveforms (spikes) to neurons, the first and fundamental step in processing neural data from electrode arrays. For \emph{in vivo} recordings, neurons can only be indirectly identified as a cluster of similar waveforms. Continuous improvements over electrode designs \cite{blanche2005polytrodes,chung2019high,hong2019novel} now enable almost continuous \emph{in vivo} recordings of over $100$ electrodes over a month, producing \emph{half a terabyte of data} per day per animal. Over such a large time span, the target neurons may undergo biological or positional changes over a day or a week \cite{rousche1992method,szarowski2003brain,subbaroyan2005finite,gilletti2006brain,barrese2013failure}, resulting in a different waveform. Existing spike-sorting algorithms \cite{takahashi2002classification,takekawa2012spike,carlson2013multichannel,rodriguez2014clustering,rossant2016spike,chung2017fully,song2018spike}
were developed under the assumption that a neuron's average waveform is constant, and thus are not effective for long, continuous recordings. In practice, automated spike sorting is performed on binned segments of 30 to 60 minutes, then the units in different time bins are aggregrated using clustering algorithms such as mutual nearest neighbors (mNNs) between adjacent bins
 \cite{chung2019high}. However, a neuron may not be active at all hours due to the subject's behavior, waveform drifting, or intermittent electrode failures. This results in many more waveform clusters than there could be neurons in the recording range (cf. Figure \ref{fig:result}). Though different clustering metrics and algorithms could be applied, after their applications, experts still need to spend hours to curate the results. Effectively, they are doing a manual assessment and selection of different clustering models' outputs using pairwise comparisons. 

\section{Our contributions} This project develops and implements C-FAR, a novel method for \textbf{fast, automated and reproducible} assessment of multiple hierarchical clustering algorithms simultaneously. Our algorithm takes any number of hierarchical clustering trees as input, then strategically queries pairs for human feedback, and outputs a provably optimal partition of the data among those nominated by the various input trees, along with a summary of the trees' contributions and deviations. Our approach builds on the recent work in \cite{gentile2019flattening}, which partitions the data by trimming a hierarchical clustering tree $T$ using pairwise comparisons. We generalize the binary search of \cite{gentile2019flattening} to an efficient and recursive algorithm to find the optimal partition across multiple trees. The core idea is illustrated in Figure \ref{fig:example.5}. 

Rather than being ``yet another clustering algorithm", C-FAR can be deployed to assess and improve any collection of hierarchical clustering methods. Unlike existing cluster assessment techniques \cite{seo2002interactively,nam2007clustersculptor,schreck2009visual,lex2010comparative,cavallo2018clustrophile} which mainly focus on visualizing different algorithms' outputs but otherwise let the user freely explore or reassign clusters, our algorithm requires the user to answer a set of queries strategically chosen by the algorithm, and thus is completely reproducible. 

Using MEArec \cite{Buccino_mearec}, we simulated a dataset of 96 neurons of two different types, with biophysically plausible drifts and waveforms. For each combination of parameters (number of trees and dropout rate), we simulated 100 trials and chose $m \in \{1,2,4,8,32,120\}$ random trees out of a set of 120 hierarchical clustering trees obtained by varying different parameters such as distance metric and linkage type (see Section \ref{sec:experiment} for details). Figure \ref{fig:cluster.runtime} shows the performance statistics of C-FAR averaged over 100 such trials. The C-FAR algorithm performs well even under adverse conditions and the runtime scales linearly in the number of input trees, making it a competitive computational tool.

\begin{figure}[htb]
\vskip 0.2in
\begin{center}
\centerline{\includegraphics[width=\columnwidth]{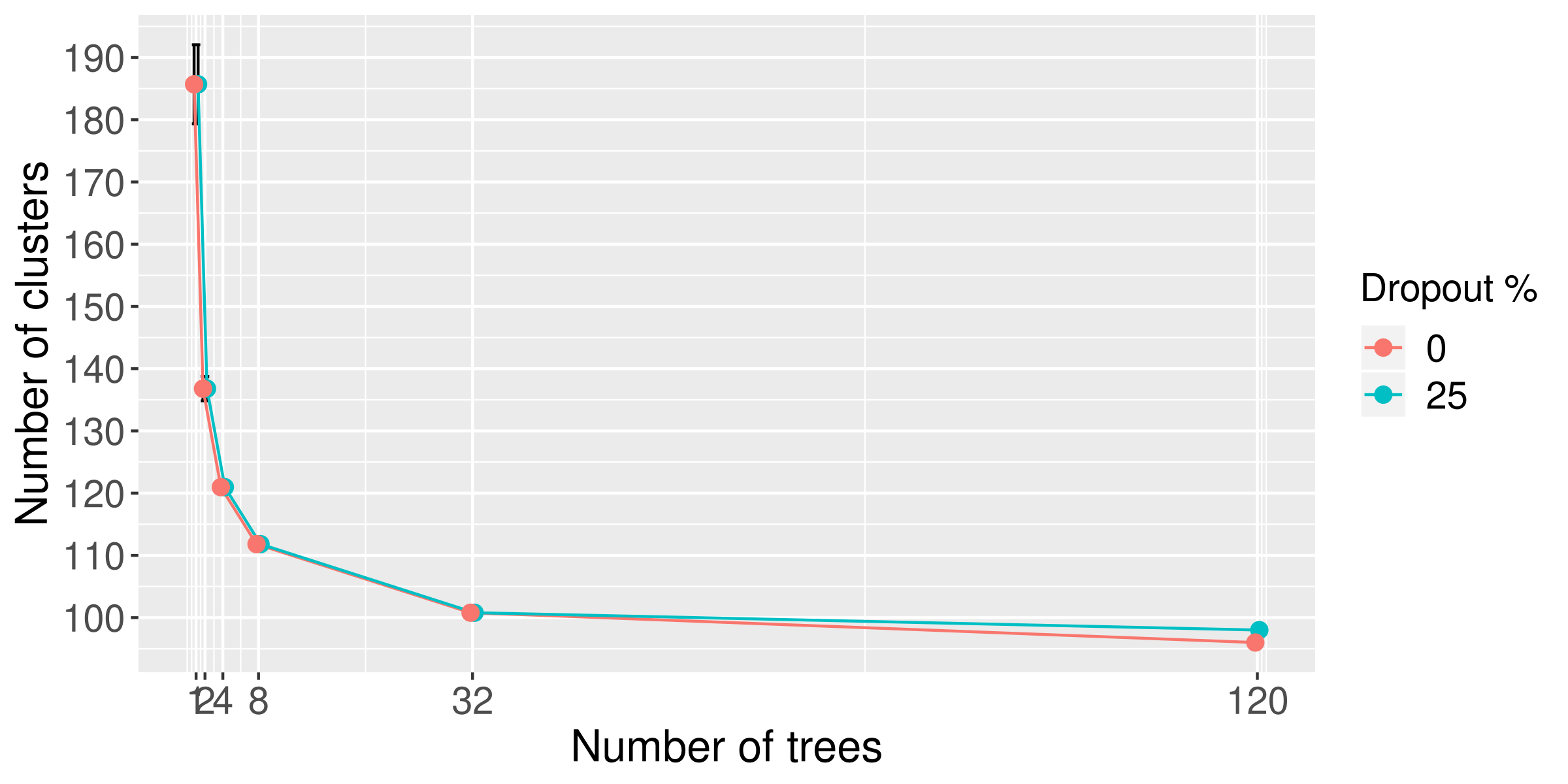}}
\centerline{\includegraphics[width=\columnwidth]{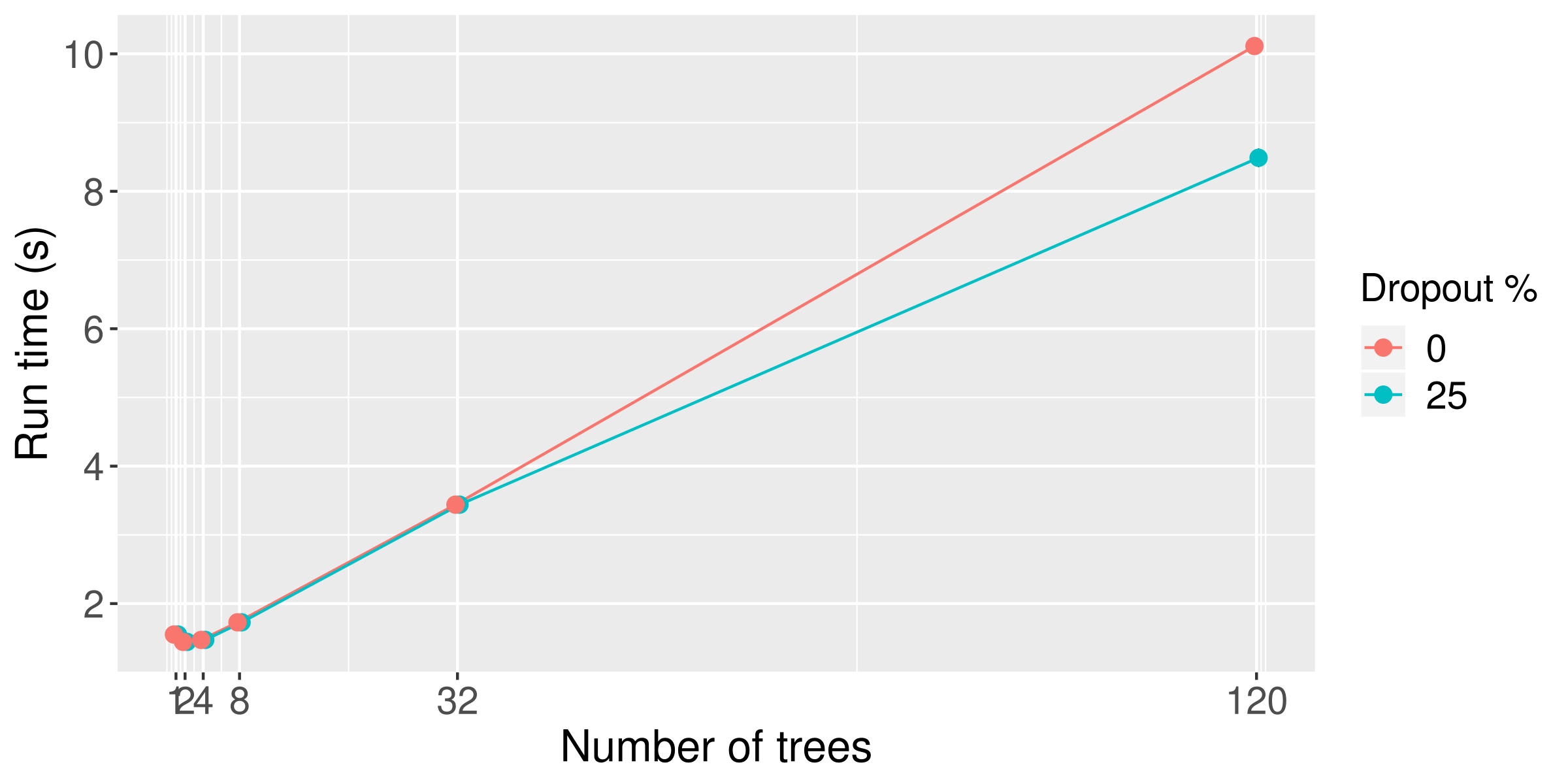}}
\caption{The C-FAR algorithm performs well even under adverse conditions and the runtime scales linearly in the number of input trees, making it a competitive computational tool. \textbf{Top.} Mean number of clusters by C-FAR vs number of input trees under two different experimental conditions: no dropout (red) vs 25\% dropout rate, meaning that each waveform has a 25\% chance of not registering in each timebin (green). The true number of clusters is 96. \textbf{Bottom.} Average runtime of C-FAR in seconds vs the number of input trees. In both plots, standard error bars are shown in black. The algorithm shows stable performances even with high dropout rates and near-perfect recovery of the clusters for 32 random trees. 
}
\label{fig:cluster.runtime}
\end{center}
\vskip -0.2in
\end{figure}

\begin{figure}[ht]
\vskip 0.2in
\begin{center}
\centerline{\includegraphics[width=0.32\columnwidth]{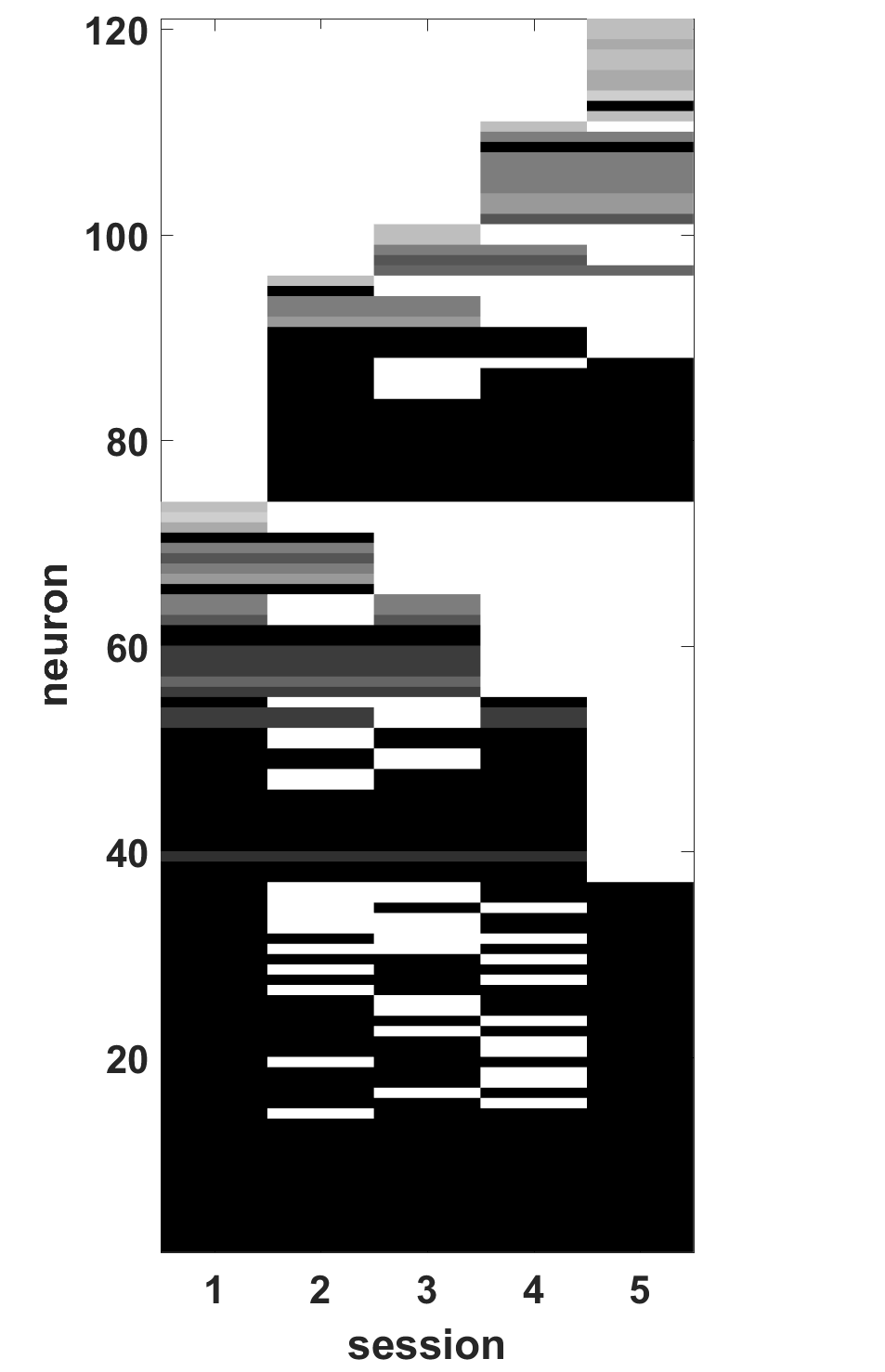}\, \includegraphics[width=0.32\columnwidth]{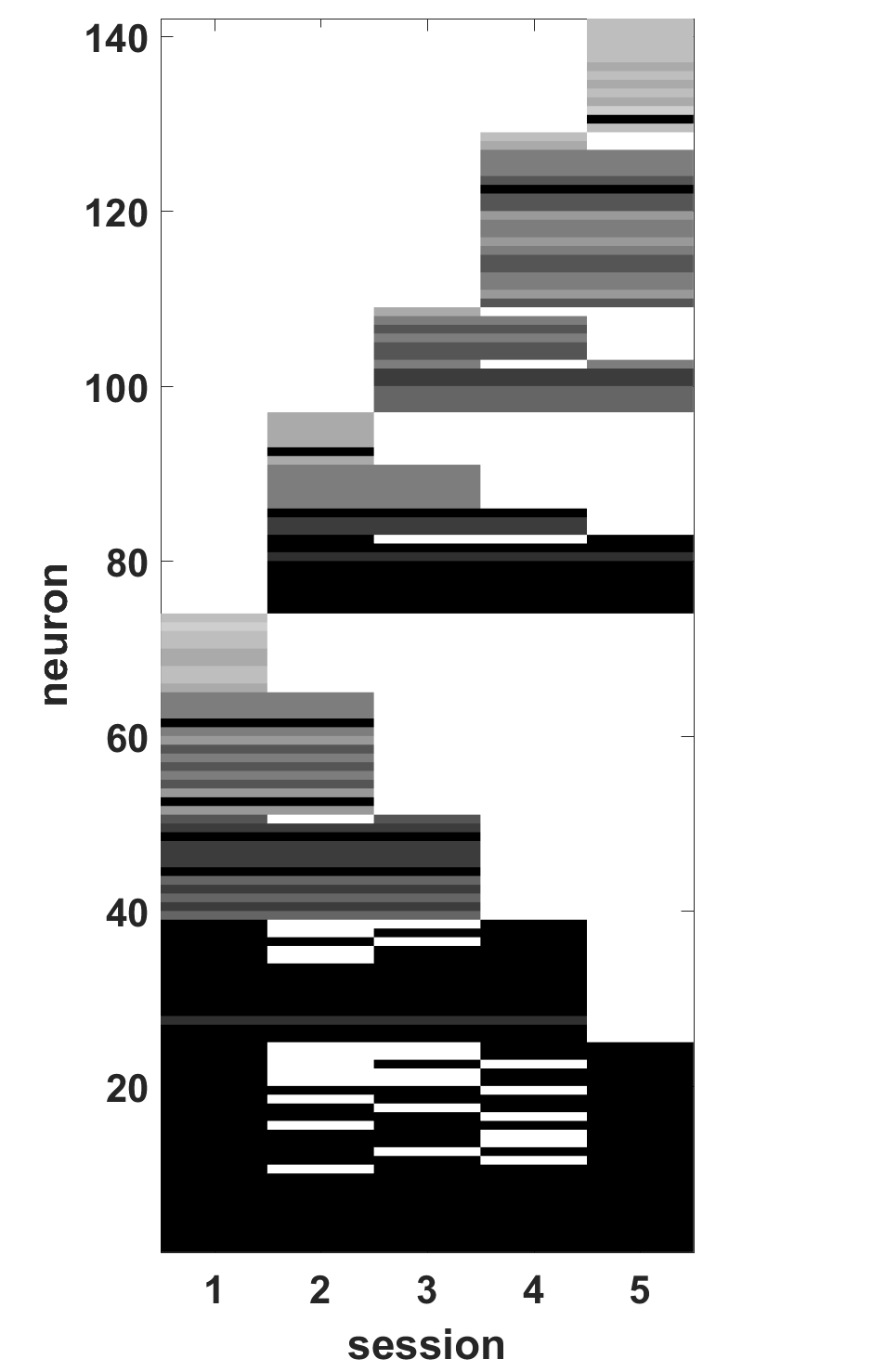}\, \includegraphics[width=0.32\columnwidth]{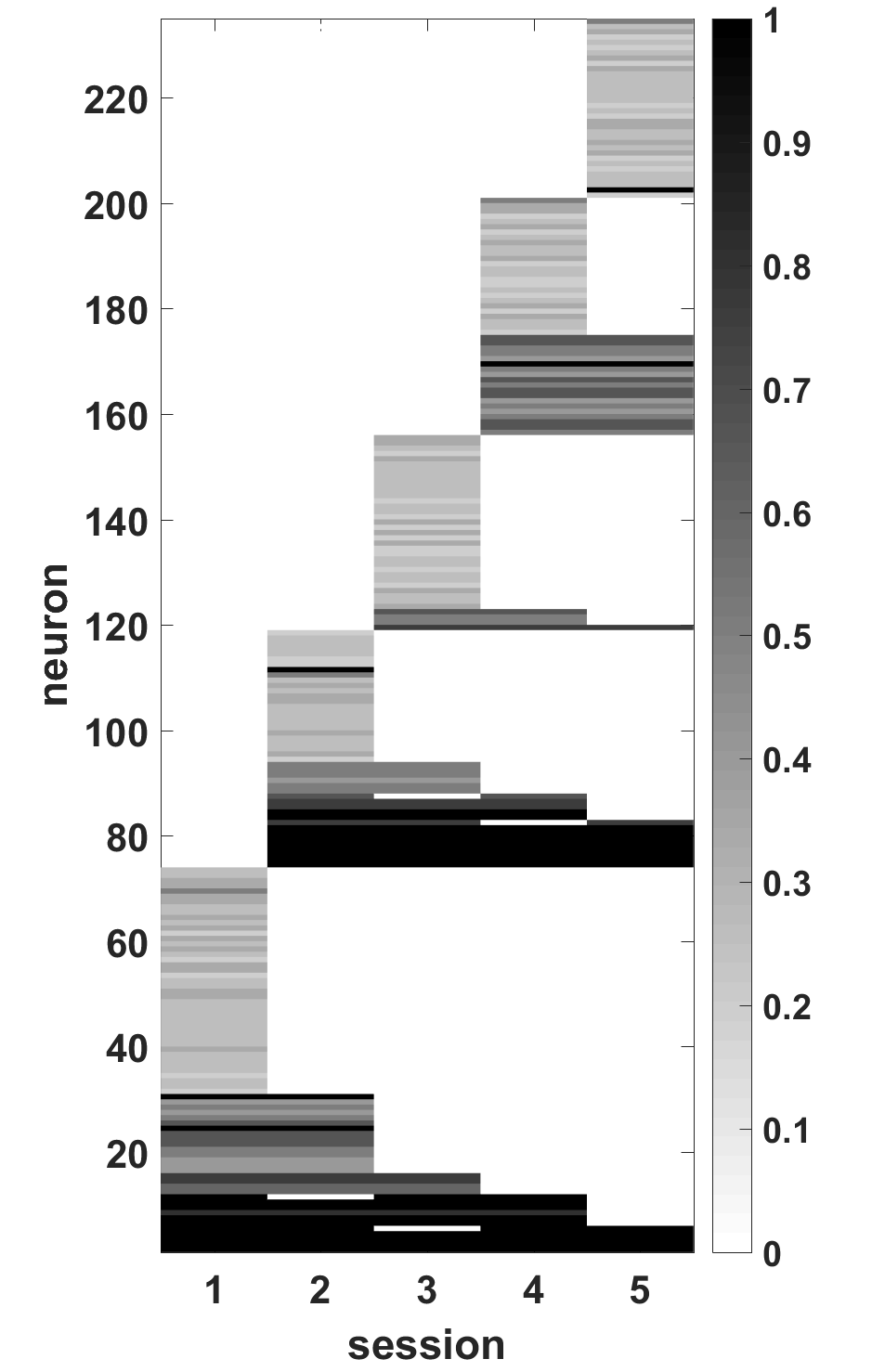}} 
\caption{On simulated neuroscience data with biophysically plausible parameters, C-FAR (left) is better than both the single-tree algorithm of \cite{gentile2019flattening} (middle) and the automated flattening comparable to clustering methods currently employed in the neuroscience community \cite{chung2019high} (right). The dataset has 96 neurons, simulated for 5 sessions. Each neuron for each session has a 25\% dropout chance. C-FAR was applied with 8 randomly chosen hierarchical clustering trees as input out of 120 trees built with different parameter choices (cf. Section \ref{sec:experiment}). The algorithm of \cite{gentile2019flattening} was applied to a random chosen tree out of these 120. Automated flattening (with no pairwise comparisons) was computed on this same tree such that each cluster has an average correlation of at least 0.96 between waveforms. For each plot, each row is a cluster corresponding to a putative neuron recovered from the respective method. The x-axis marks the sessions where the waveforms in this cluster appeared. The grayscale color of each cluster shows the recovery rate, defined as the number of sessions of this cluster divided by the number of sessions that the true neuron corresponding to the this cluster appeared in the dataset. Darker color indicates higher recovery rate. Black is 100\%, meaning all waveforms of this neuron are perfectly clustered. C-FAR produced 120 clusters, with 78\% neurons having 100\% recovery rate. The other two methods produced rather more fragmented clusters: \cite{gentile2019flattening} produced 140 units while automated flattenig produced 224 units, and their perfect recovery rates are lower, at 55\% and 26\%, respectively. The distribution of recovery rates for each method is shown in the opposite column.}
\label{fig:result}
\end{center}
\vskip -0.2in
\end{figure}

\begin{figure}[h]
\vskip 0.2in
\begin{center}
\centerline{\includegraphics[width=\columnwidth]{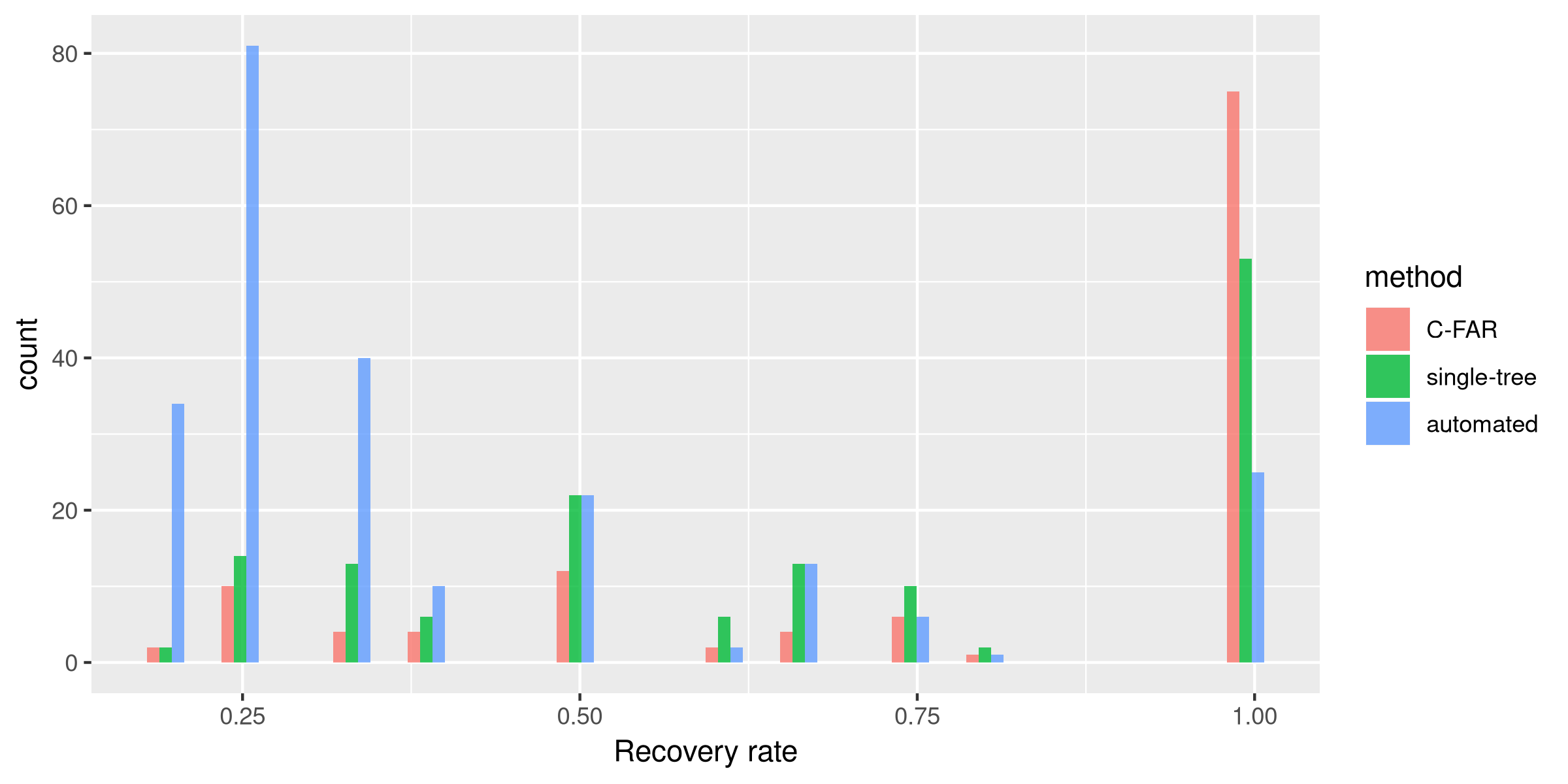}}
\caption{Histogram of recovery rate, accompanies Figure \ref{fig:result}.}\label{fig:recovery.rate}
\end{center}
\vskip -0.2in
\end{figure}

%


\subsection{Contributions to the spike-sorting literature}
Most spike sorting algorithms start with a dimension reduction step such as principal component analysis (PCA), followed by a clustering step. There is a large methodology literature that builds on a combination of many techniques: independent component analysis \cite{takahashi2002classification}, Kalman filter \cite{calabrese2011kalman}, variational Bayes \cite{takekawa2012spike}, mixture modeling \cite{carlson2013multichannel}, peak-density \cite{rodriguez2014clustering}, singular value decomposition \cite{pachitariu2016fast}, template matching \cite{yger2016fast}, random projections \cite{chung2017fully}, integer programming \cite{dhawale2017automated}, convolutional dictionary learning \cite{song2018spike} and deep neural networks \cite{lisupervised}, to name a few. A number of these methods are parametric, and thus can be generalized to allow time-varying features. 

However, given the plethora of choices, the bottleneck in fact lies in assessing their results. Even over the short time frame of several hours, automated spike sorting is very difficult due to a complex noise distribution, non-Gaussian clusters and the presence of biological irregularities such as bursting \cite{harris2001temporal,quirk2001experience}. All but a few of the above methods still require hours of manual curation \cite{chung2017fully}. For longitudinal sorting, the dependency on human experts is only amplified. Direct observation of a neuron is costly, requires special equipment, and can only monitor one neuron at a time \cite{yger2018spike}. In practice, clusters are still assessed by experts through pairwise comparisons, with \emph{ad hoc} reassignments and minimal justifications. Instead of yet-another-algorithm, C-FAR supplies the neuroscience community with a fast and reproducible way to assess the quality of multiple clustering methods.

\subsection{Contributions to the clustering literature}
Our proposed C-FAR algorithm functions as a fast algorithm for clustering, as well as a diagnostic tool for comparisons of different hierarchical clustering outputs and an automated model selection method. It straddles across multiple literatures; each offers solutions to one of the above problems but not all at once. 

A number of papers have considered clustering algorithms that minimize number of pair comparisons. Unfortunately, existing algorithms either require unrealistic assumptions such as noise-free comparisons and tight clusters \cite{eriksson2011active}, are specific to a particular clustering algorithm \cite{shamir2011spectral,wauthier2012active,ailon2017approximate,ashtiani2016clustering,chatziafratis2018hierarchical}, or are slow, too general and do not take advantage of existing information offered by the input clustering methods \cite{dasarathy2015s2,mazumdar2017clustering,chen2014clustering,krishnamurthy2012efficient}. On the other extreme, ensemble clustering methods aggregrate multiple clustering outputs to compute a new partition \cite{vega2011survey,alqurashi2019clustering,fred2005combining,strehl2002cluster,karypis1998fast,fern2004solving,iam2011link,mimaroglu2011diclens,huang2015robust,ayad2010voting,dudoit2003bagging,fern2003random,gionis2007clustering}.
There are many alternative approaches; among them, object co-occurence uses pairwise comparisons induced by the different clusters to produce a similarity matrix from which the final partition is computed \cite{monti2003consensus,strehl2002cluster,fred2005combining}.  
Most methods do not take advantage of cluster similarity measures coming from a hierarchical clustering and thus are often slow, with quadratic complexity in the number of clusters. Critically, unlike our proposed method, user feedback is not taken into account in the construction of the similarity matrix, thus a separate model selection step is required as with any other clustering algorithms. 

The difficulty of comparing different clustering methods is common to many applications, most notably in genetics. A variety of methods have been developed to visually compare multiple clustering algorithms \cite{cao2011dicon,lex2012stratomex,l2015xclusim,pilhofer2012comparing,zhou2009visually,sacha2017somflow}. A number of these are interactive and allow the user to reassign clusters, supervise a chosen clustering algorithm, or give weights to selected observations \cite{seo2002interactively,nam2007clustersculptor,schreck2009visual,lex2010comparative,cavallo2018clustrophile,balcan2008clustering,awasthi2017local}. 
However, the sheer volume of data makes these methods time-consuming to use for neural tracking. More importantly, they often lack reproducibility since users have too much freedom in selecting the type of feedback to give, and not all of the user's decisions are systematically recorded. Furthermore, analyzing the feedback can itself be a complex task, since it can be unclear which decisions are the most consequential in shaping the final partition. 

\begin{figure}[tb]
\includegraphics[width=\columnwidth]{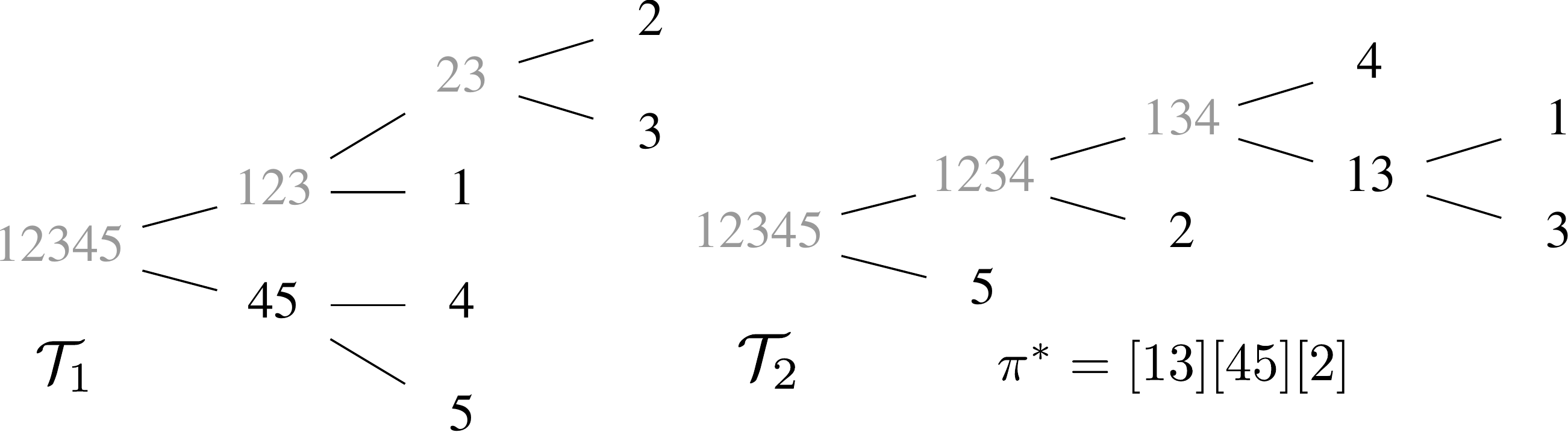}
\caption{Core idea of C-FAR on a simple example with $n = 5$ observations. The true partition~$\pi^\ast$ induces a purity process on each hierarchical clustering tree, where a node is pure (black) if observations in the corresponding partition come from the same true cluster. The purity of a given node can be determined from pairwise comparisons between the leaves of its left and right children. Purity satisfies important inequalities imposed by set inclusion, making it monotone along the branches and allow information to be pooled across trees. For example, if the queries show that $y(23) = 0$ in tree $\mathcal{T}_1$, then $y(1234) = 0$ in tree $\mathcal{T}_2$, and all of their ancestors are also not pure. 
Our algorithm leverages this fact to search over all trees at once using the same set of pairwise comparisons. It is a fast and recursive way to compute $\pi^\ast$, with provable accuracy and complexity.} \label{fig:example.5} 
\vskip-0.8cm
\end{figure}

\subsection{Relation to clustering with active learning}
Assessing clusters' qualities by pairwise comparisons is close to active learning, where a learner is given access to unlabeled data and is allowed to adaptively choose which ones to label \cite{lewis1994sequential}. Both share the difficulty of finding a representative sample \cite{zhao2003representative}, however, cluster assessment has the added difficulty that our search space is over ${n \choose 2}$ pairs instead of $n$ observations \cite{xiong2016active}. Uniform sampling is generally inefficient \cite{eriksson2011active}. 

To see this, suppose that data consist of $k$ clusters generated by independently assigning the label $1$ to $k$ to each observation. Suppose that a partition $\pi$ that is incorrect on $p$ fraction in each cluster. Now consider a randomly chosen pair of observations $(u,v)$. The probability that they belong to different true clusters and are wrongly put together by $\pi$ is $O(p/k)$. The probability that they belong to the same true cluster and are wrongly separated by $\pi$ is also $O(p/k)$. Thus overall, the probability that $\pi$ makes a mistake on a randomly chosen pair $(u,v)$ is $O(p/k)$.  In particular, if the number of clusters $k$ is large, as often the case in applications, then uniform sampling is a bad strategy to uncover misclustered pairs.

One may be tempted to sample pairs closer to the boundary of adjacent clusters. For $k=2$, this strategy was the default in the early days of active learning \cite{lewis1994sequential,campbell2000query,tong2001support,zhao2003representative}, but was later shown to induce a serious sampling bias \cite{dasgupta2008hierarchical}. Ideally, one would like to sample within clusters to measure purity, sample between clusters to measure modularity, then sample on the boundary for misclassification. A number of papers on active clustering are built on this intuition \cite{huang2007semi,mallapragada2008active,wang2011approximate,hofmann1998active}.
Tractable models for theoretical analysis often assume a specific clustering model and advocate for selecting the most `informative' pair, or one that minimizes empirical uncertainty \cite{nguyen2004active, dasgupta2008hierarchical}.
Some papers view the pairwise comparisons as must-link or must-not-link constraints, then adapt each existing clustering algorithm such as $k$-means or spectral clustering into one that would respect these constraints \cite{xiong2012online,voiron2016deep,xiong2016active,lipor2017leveraging}. These approaches produce samples highly adapted to a single clustering method, which may not be suitable for assessing another. In other words, they do not advert the key difficulty: how to produce a representative sample that allows fair comparisons between methods? 

\section{Main results}\label{sec:innovation}

Our work generalizes the recent approach by \cite{gentile2019flattening}, who used active learning with pairwise comparisons to create partitions from one hierarchical clustering tree. We introduce the concept of node purity, and note that this satisfies important inequalities imposed by the lattice of subsets of $[n]$ ordered by inclusion. This allows us to generalize the binary search idea that drives the algorithms of \cite{gentile2019flattening}. As a result, our algorithm can take multiple trees as input, and utilize the same set of pairwise comparisons to simultaneously trim all the trees at once to arrive at the optimal partition. We expect our algorithm to have the same fast performance as that theoretically guaranteed in \cite{gentile2019flattening}, and this is supported by experiments (cf. Figure \ref{fig:cluster.runtime}). First we briefly review the results of \cite{gentile2019flattening} before stating our general setup and algorithms. 

\subsection{Partition from one tree by pairwise comparisons}\label{subsec:one.tree}
Let $T = (V,E)$ be a binary tree obtained from applying a hierarchical clustering procedure on some dataset with $n$ observations. A feedback matrix $\Sigma \in \{\pm 1\}^{n \times n}$ is a symmetric matrix encoding the binary relations between the observations: $\Sigma_{ij} = 1$ if observations $i$ and $j$ are similar, and $\Sigma_{ij} = -1$ if they are not. Given $T$, an active learning algorithm proceeds in a sequence of rounds. At round $t$, the algorithm chooses a pair $(i,j) \in [n] \times [n]$, and observes the associated label $\Sigma_{ij}$. At some point, the algorithm is stopped, and is compelled to produce a partition of $[n]$ by selecting a collection of nodes in $V$ that corresponds to mutually disjoint subsets. The goal is to return a good partition from $T$ by making as few queries on $\Sigma$ as possible.
We shall assume that there is a ground-truth feedback matrix $\Sigma^\ast \in \{\pm 1\}^{n \times n}$ induced by some true partition $\pi^\ast$ of $[n]$, where $\Sigma^\ast_{ij} = 1$ if and only if $i$ and $j$ belongs to the same block of $\pi^\ast$. In the noise-free case, $\Sigma = \Sigma^\ast$. In the noisy case, $\Sigma$ is obtained from $\Sigma^\ast$ by switching the sign of $\lambda {n \choose 2}$ entries for some $\lambda \in (0,1)$. Say that $\pi^\ast$ is realizable by $T$ if each of its blocks corresponds to a node of $V$. 

The key observation of \cite{gentile2019flattening} is that if $\pi^\ast$ is realizable by $T$, then it can be recovered by performing a binary search on paths that connect the root to the leaves.
Indeed, let $y: 2^{[n]} \to \{0,1\}$ be the purity function defined on subsets of the data, where $y(S) = 1$ if all observations in $S$ belong to the same block of $\pi^\ast$, and $0$ else. We can define $y$ on the nodes of $T$ by identifying them with the subset of observations that they represent. Since the clusters are hierarchical, for any leaf $\ell \in T$, the unique path from the root to $\ell$ has monotone label: if $v$ is pure, then any child of it is pure; if $v$ is not pure, then any parent of it is also not pure (cf. Figure \ref{fig:example.5}). Now, purity of a node $v$ can be decided by pairwise comparisons of elements from its left and right children. In the noiseless case, one comparison will suffice. In the noisy case, one can do repeated sampling and take majority. Thus one can find the largest pure node along any a given path via binary search, and such a node must be a block of $\pi^\ast$ by the realizability assumption. The algorithms in \cite{gentile2019flattening} select paths with maximum entropy at each step, do a binary search to find a new block of $\pi^\ast$, update the tree and repeat accordingly. 

\subsection{Best partition from multiple trees with the same set of comparisons}
We generalize the above approach to multiple trees as follows. Suppose there are multiple hierarchical clustering methods, which give yield to multiple trees $\mathcal{T} = \{T_1, T_2, \dots, T_r\}$. Say that the setup is realizable if each block of $\pi^\ast$ correspond to a node in one of the trees in $\mathcal{T}$. Let $y$ be the purity process as above. That is, for a node $v_i$ in tree $T_i$, $y(v_i) = 1$ if the cluster corresponding to $v$ is a subset of a block of $\pi^\ast$, and $0$ else. Now, $y$ satisfies important inequalities imposed by the lattice of subsets of $[n]$ ordered by inclusion. Namely, for any two subsets $v_i,v_j$ of the data, 
\begin{align}
& y(v_i \cap v_j) \geq \max(y(v_i),y(v_j)) \nonumber, \\
\mbox{ and } & y(v_i \cup v_j) = \min(y(v_i), y(v_j)). \label{eqn:important.ineq}
\end{align}
These constraints generalize the key observation of \cite{gentile2019flattening} that purity is a monotone non-decreasing process along any branch of a hierarchical clustering tree. 
When there are multiple trees, \eqref{eqn:important.ineq} enables one to pool information on node purity across multiple trees. This suggests the following algorithm to find one block of the optimal partition. It generalizes the binary search of \cite{gentile2019flattening}.

\begin{algorithm}[tb]
\caption{\texttt{FindOneBlock}} \label{alg:one.block}
\flushleft 
\textbf{Input}: hierarchical clustering trees $\mathcal{T} = \{T_1, T_2, \dots, T_r\}$ on $[n]$. \\
\textbf{Output}: one node among the trees $\mathcal{T}$ which corresponds to one cluster best compatible with the pairwise feedback
\begin{algorithmic}[1]
\STATE Start with a set $S \subset [n]$, maximal w.r.t inclusion, which has different minimal extensions $v_1, v_2,\dots v_r$ in the $r$ trees. 
\STATE Determine the purity of $v_1, \dots, v_r$ by sampling from $v_1 \backslash S, \dots, v_r \backslash S$. 
\STATE Renumber so $\{v_1,\dots,v_m\}$ is the set of pure nodes
  \IF{$m = 1$}
    \STATE Do binary search up the tree $T_1$ on this branch until we find the largest node $B$ (by inclusion) with purity 1
  \ELSE
    \STATE Look for the minimal extensions $w_1,\dots,w_r$ of $S' := v_1 \cup \dots \dots v_m$ in each tree.
    \STATE Repeat line 2 with $(v_1,\dots,v_r) := (w_1,\dots,w_r)$ and $S := S'$.
  \ENDIF
\STATE \textbf{return } the node $B$
\end{algorithmic}
\end{algorithm}

\begin{example}
Let the two trees depicted in Figure \ref{fig:example.5} be input to Algorithm \ref{alg:one.block}. In line 1, we can choose $S = \{3\}$. Then $v_1 = \{2,3\}$ and $v_2 = \{1,3\}$.  Since $y(\{1,3\}) = 1$, we can conclude that $y(23) = 0$, so $m = 1$. Algorithm \ref{alg:one.block} will then do a binary search along the branch $12345 - 1234 - 134 - 13$, find that $B = \{1,13\}$, and output this block of $\pi^\ast$. 
\end{example}

\begin{proposition}
In the noiseless case, if $\pi^\ast$ is realizable, then Algorithm \ref{alg:one.block} outputs one block of $\pi^\ast$.
\end{proposition}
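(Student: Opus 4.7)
The plan is to maintain an invariant on the working set $S$ and then use realizability to pin down the block of $\pi^\ast$ at the terminal step. First, I would establish that at every iteration the current $S$ is pure, i.e., $y(S) = 1$. Initializing $S$ to a singleton makes the base case trivial. In the inductive step, the $m > 1$ branch replaces $S$ by $S' = v_1 \cup \cdots \cup v_m$ where each $v_i$ is a pure minimal extension of $S$; since every $v_i$ contains $S$, the $v_i$'s pairwise intersect, so iterating the union identity in \eqref{eqn:important.ineq} gives $y(S') = \min_i y(v_i) = 1$. In the noiseless case the purity of each candidate $v_i$ is decided correctly from a single pairwise comparison between an element of $S$ and an element of $v_i \setminus S$.

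Termination is immediate: each iteration with $m > 1$ strictly enlarges $S$, and $|S| \leq n$, so after at most $n$ iterations the algorithm must reach the $m = 1$ branch (or, trivially, the case in which $S$ has already grown to equal its block, in which case every minimal strict extension is impure and the algorithm returns $S$ itself).

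The crux, and the main obstacle, is to show that when $m = 1$ the unique block $A$ of $\pi^\ast$ containing $S$ is itself a node of the tree $T_1$ that holds the one pure minimal extension. By realizability, $A$ is a node in some tree $T_j$; since $A \supseteq S$, the node $A$ is an ancestor of $S$ in $T_j$, so the minimal ancestor of $S$ in $T_j$, namely the current $v_j$, satisfies $v_j \subseteq A$ and is therefore pure. But $m = 1$ means only $T_1$ contributes a pure extension, so $j = 1$ and $A$ is a node of $T_1$. This is where realizability across multiple trees is essential; without it one could have a pure extension in $T_1$ but the true block living only in some other tree, and binary search in $T_1$ would stop short of the block.

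Given that $A$ is a node of $T_1$, the binary search along the branch from $v_1$ to the root of $T_1$ returns $B = A$. Every ancestor of $v_1$ up to and including $A$ is contained in $A$ and hence pure, while the parent of $A$ in $T_1$ contains the sibling subtree of $A$, which is disjoint from $A$ and therefore meets other blocks of $\pi^\ast$, making the parent impure. Purity along the branch is monotone non-increasing (any superset of an impure set retains the two witnessing elements from distinct blocks), so the binary search correctly identifies $B = A$, which is a block of $\pi^\ast$ by construction.
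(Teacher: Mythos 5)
Your proposal is correct and follows the same route as the paper's proof sketch: maintain the invariant that $S$ stays pure, use realizability to guarantee a pure minimal extension exists, and terminate via binary search. In fact your write-up is more complete than the paper's, since it supplies the key step the sketch leaves implicit --- that when $m=1$ realizability forces the block $A$ containing $S$ to be a node of $T_1$ itself (otherwise the tree containing $A$ would also contribute a pure minimal extension), so the binary search terminates at a full block rather than a proper pure subset --- and it handles the degenerate $m=0$ case where $S$ has already grown to equal its block.
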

\begin{proof}[Proof sketch]
Since $\pi^\ast$ is realizable by $\mathcal{T}$, the initial set $S$ must have $y(S) = 1$ and at least one of the $v_1,\dots,v_r$ must be pure, by minimality, so $m \geq 1$. At each iteration, the algorithm produces another pure set $S'$, and thus the same argument applies. Repeating this argument shows that the algorithm terminates with finding one block of $\pi^\ast$. Induction on the number of remaining blocks of $\pi^\ast$ concludes the proof.
\end{proof}

\section{Experimental results}
\subsection{Data simulation and choice of parameters}\label{sec:experiment}
While the C-FAR algorithm could still perform well in more adverse settings, we have taken care to simulate datasets with biologically plausible parameters. In particular, drifting waveform templates of extracellularly recorded neurons were simulated using MEArec \cite{Buccino_mearec}. Modeling the probes reported in \cite{zhao2017nanoletter}, we simulated 32 channels recording electrodes arranged into a 2D array of shape $4 \times 8$ with an electrode center to center distance of 50 $\mu$m. Two types of rats somatosensory cortex neuron models included in the paper \cite{Buccino_mearec} \cite{ramaswamy2015neocortical} were selected : bitufted cells (BTC) and double bouquet cells (DBC). For each of the two neuron types, 48 neuron instances were initially randomly positioned within the area of $92,400 \mu$m$^2$ covered by the 2D electrode array. Similar to \cite{Hurwitz_NIPS2019}, an overhang detection range, in our case 35 $\mu$m, was allowed to account for the detectable neurons that reside/drift slightly beyond the electrode 2D boundary. Assuming the same overhang, one of the latest in-vivo high density flexible electrodes \cite{chung2019high} reported an average of $\sim$16 neurons (max 45) detected in an array area of $23 320$ $\mu$m$^2$. Assuming that neuron number directly scales with recording area, we would get an average of $63$ (max 178) neurons. Therefore, our chosen number of $96$ neurons is well within this range. The vertical distance (height) of neuron were bounded within 10 $\mu$m to 80 $\mu$m relative to the electrode plane. So far, unanimous opinion or evidence on how fast or in which directions neurons might drift or migrate with respect to the recording electrodes is not present. For some electrode arrays, no preferred moving directions could be observed \cite{luan2017ultraflexible} for a few weeks during which neurons moved $\sim$10 $\mu$m. For other electrode designs, neurons could move as far as 20 $\mu$m mostly along vertical directions \cite{Pachitariu2018drift} in a single experiment session. For our simulation, the 96 neurons were simulated to move with a given constant velocity for 5 steps resulted in a total of 480 detected units, analogous to 480 spike-sorted neurons detected across a total of 5 intermittent in-vivo recording sessions. With drifting magnitude of at least 7 $\mu$m per step, the number of possible drifting directions were limited to a maximum of 5 per type of neuron. To further increase the difficulty of the task, physiologically feasible random rotations were applied at each time step as described in \cite{Buccino_mearec}. Finally, in reality, the number of activated and detectable neurons at any given time step are affect by numerous factors such as the behavioral state of the animal \cite{blanche2005polytrodes}. To simulate random inactivation of neurons, in a separate dataset, a 25 \text{\%} drop out chance was applied to the 96 neurons at each of the 5 drifting steps, meaning that the waveform for this neuron is missing for this step. This resulted in a total of 360 detected units. Recorded waveforms on any channel was down-sampled to have 38 time points. Therefore, the waveform of each of 480 or 360 units was represented as a 3D array of shape $4 \times 8 \times 38$, which was later spatially flattened into $1 \times 1216$ array. A list of labels indicating which of the 96 seed neurons that the 480 or 360 units correspond to was also generated to serve as the ground truth. 
\subsection{Creating the original hierarchical clustering ensemble}
Agglomerative hierarchical cluster trees were created with the original MATLAB function \texttt{linkage()} based on the pairwise waveform distance of the input units. By varying parameters in the raw waveform processing step (2 choices\footnote{raw or take first derivative in time for each channel}), distance transformations (3 choices\footnote{none, first ten PCA components across units or first 3 components of tSNE across units}), distance metric (5 choices\footnote{Euclidean, squared Euclidean, Manhattan, Chebychev or correlation}) and linkage type (4 choices\footnote{single, average, complete, weighted}), we obtain 120 different trees, one for each combination of parameters. 

\subsection{Implementation of C-FAR}
Each of the original hierarchical clustering tree was first converted into a tree object instance of a customized MATLAB tree class adopted from  \cite{tinevez}. Each node of this first tree (named composition tree) stores a list of unit IDs clustered at this node. A second tree (named purity tree) whose structure is synchronized with the first tree, stores the purity value (whether the clustering defined by this node in the corresponding composition tree is correct, wrong, or unknown) in each node. \footnote{For speed improvements, the forest formed by all composition trees was further converted into a 3D boolean array of shape (number of nodes)-by-(number of units)-by-(number of trees) through one hot encoding of unit IDs}. Next, ground truth cluster labels was converted into a pairwise ground truth decision matrix indicting whether any given two pairs of unit belongs to the same cluster. After that, \texttt{FindOneBlock} was initialized with a leaf node (pure by definition) and executed through repeatedly asking users ( or in this case, the ground truth decision matrix ) the purity of some nodes. Before the next query, all past answers obstained were consulted first to see if the answer to the current question could already be inferred before consulting ground truth matrix or users. Whenever an impure node is identified at any step, the purity of all nodes inside the forest whose composition is a super set of that of the impure node is updated as impure simultaneously so their purity will not be queried for user feedback again. At the end of \texttt{FindOneBlock} when one block of true clusters was found. Units forming this block would then be appended to the clustering result, eliminated from all trees before \texttt{FindOneBlock} was executed for another iteration until all forest became empty. Between iterations, purity trees were reinitialized with to be the state unknown.
\subsection{Evaluation and results}\label{sec:results}
Dateset was generated on Linus OS (Ubuntu 18.04) using the command line interface provided \cite{Buccino_mearec}. Hierarchical clustering as well as C-FAR were performed with Matlab 2017a running on a Windows 10 computer equipped with 16-core Intel Xeon E5-2670 CPU. Converting from original hierarchical clustering result returned by MATLAB function \texttt{linkage()} into the C-FAR ready data structure was performed with 8-core parallel computation with using MATLAB parallel computing toolbox. Performance metrics for accuracy are the number of clusters (cf. Figure \ref{fig:cluster.runtime}, top) and adjusted mutual information (AMI) between true and inferred labels \cite{vinh2010JMLR,vinh2009icml} (cf. Figure \ref{fig:ami}). Figure \ref{fig:cluster.runtime} (bottom) shows the run time in seconds, while Figure \ref{fig:ami} (bottom) shows the average number of queries (number of pairwise comparisons) a user must provide for the algorithm to outputs a clustering based on the given trees. For one tree, \cite{gentile2019flattening} showed that the number of queries is $O(n)$ where $n$ is the number of leaves of the tree. Thus our experimental results show that the number of queries of C-FAR scales as $\log(n) + \log(m)$, where $m$ is the number of input trees and $n$ is the initial number of clusters  (cf. Figure \ref{fig:ami}).

\begin{figure}[htb]
\vskip 0.2in
\begin{center}
\centerline{\includegraphics[width=\columnwidth]{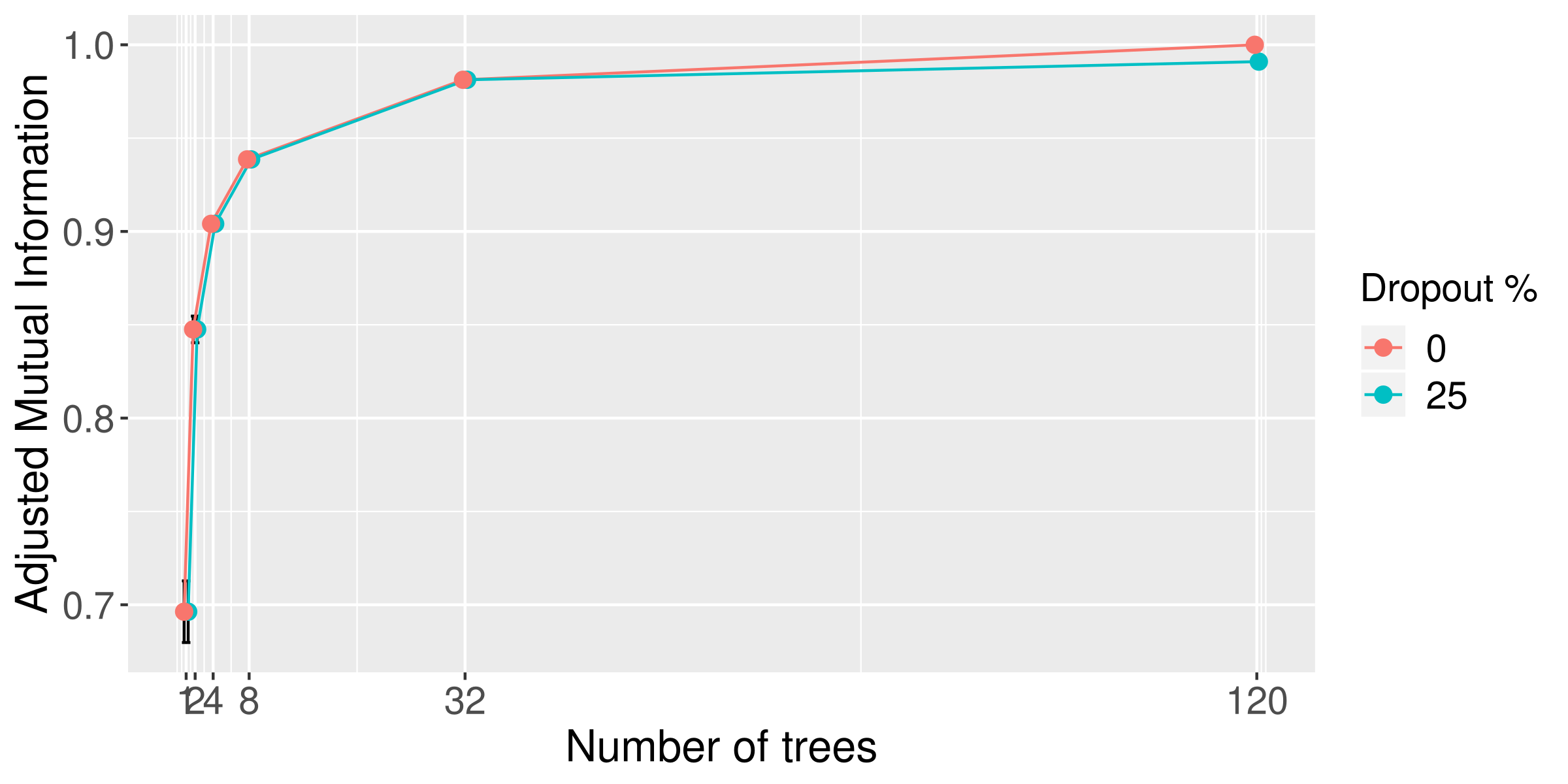}}
\centerline{\includegraphics[width=\columnwidth]{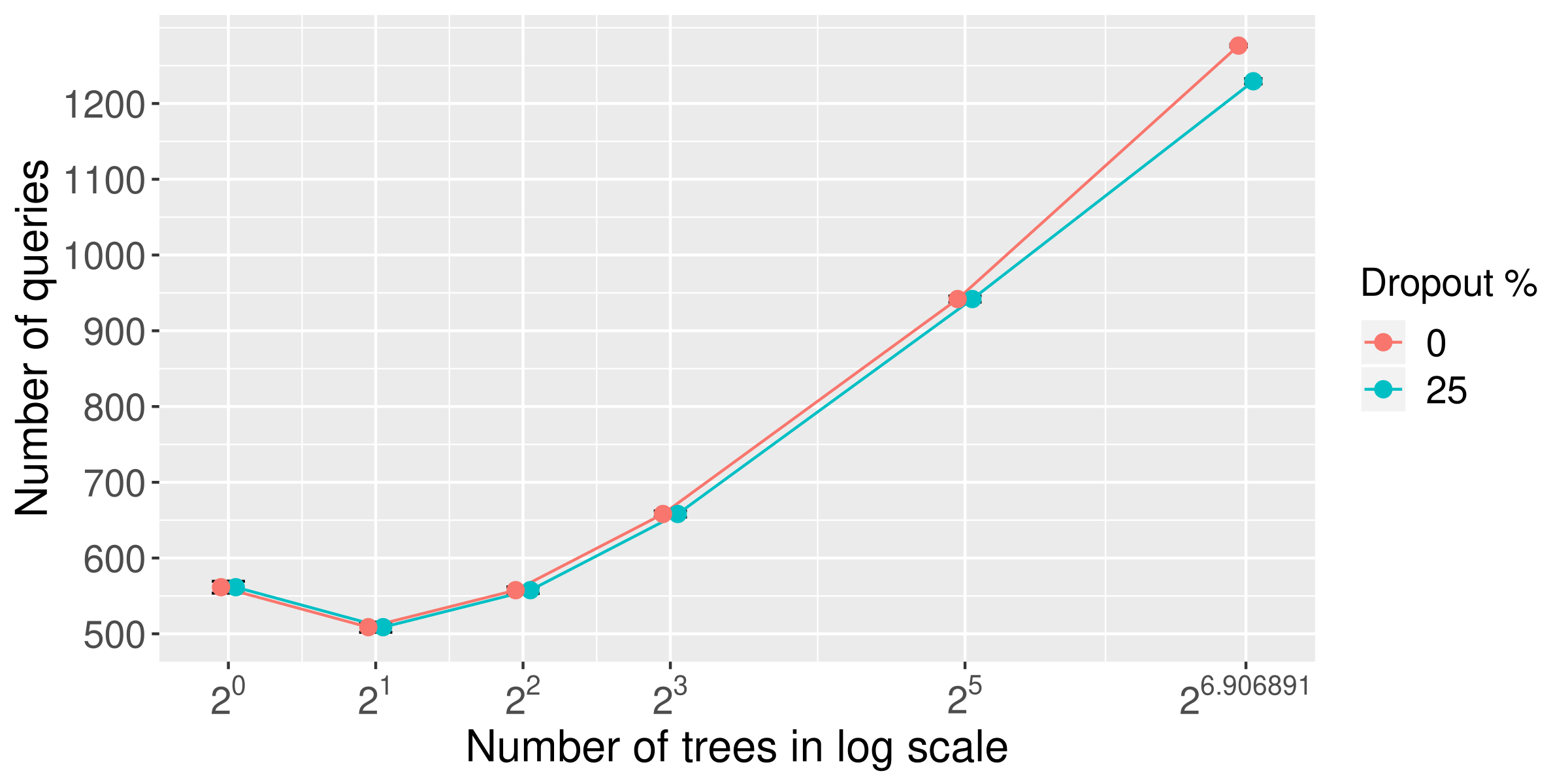}}
\caption{Additional performance metrics for C-FAR on our dataset. \textbf{Top.} adjusted mutual information (AMI) between the true and the cluster estimated by the algorithm vs number of input trees. An AMI of 1 means the estimated clusters are equal to the true; higher number means better performance. \textbf{Bottom} Number of queries (bottom) vs number of input trees in log scale. From the one-tree algorithm of \cite{gentile2019flattening} to just using 8 trees in C-FAR, adjusted mutual information increases sharply while the average number of queries only increase by 17\% (from 561 to 658)}
\label{fig:ami}
\end{center}
\vskip -0.2in
\end{figure}
As seen from Figures \ref{fig:cluster.runtime} and \ref{fig:ami}, the C-FAR algorithm has significantly better accuracy over the original one-tree algorithm of \cite{gentile2019flattening} while paying little in computation penalty. The average number of clusters drops from more than twice the true number with one tree to within 10\% of the true with 8 trees (cf. Figure \ref{fig:cluster.runtime}), while adjusted mutual information climbs from 0.7 to 0.94. In addition, the number of queries of C-FAR scales as $\log(m)$ (cf. Figure \ref{fig:ami}), with total runtime scales linearly in $m$ (cf. Figure \ref{fig:cluster.runtime}).

\subsection{Summary}\label{sec:summary}
This paper presents C-FAR, an ensemble clustering algorithm that takes in an arbitrary number of hierarchical clustering trees, prompts the user for pairwise comparisons between strategically chosen pairs of clusters, and outputs a flattened clustering. It generalizes the one-tree algorithm of \cite{gentile2019flattening}. By taking advantage of the common information across multiple trees, our algorithm performs significantly better than the one-tree case in simulated neuroscience data, with only a linear increase in total computation time. These results indicate that C-FAR is highly suitable as a model selection and assessment tool in clustering tasks, especially in the spike-sorting problem in neurosience. 

\bibliography{nsf-refs}

\begin{thebibliography}{90}
\providecommand{\natexlab}[1]{#1}
\providecommand{\url}[1]{\texttt{#1}}
\expandafter\ifx\csname urlstyle\endcsname\relax
  \providecommand{\doi}[1]{doi: #1}\else
  \providecommand{\doi}{doi: \begingroup \urlstyle{rm}\Url}\fi

\bibitem[Ailon et~al.(2017)Ailon, Bhattacharya, Jaiswal, and
  Kumar]{ailon2017approximate}
Ailon, N., Bhattacharya, A., Jaiswal, R., and Kumar, A.
\newblock Approximate clustering with same-cluster queries.
\newblock \emph{arXiv preprint arXiv:1704.01862}, 2017.

\bibitem[Alqurashi \& Wang(2019)Alqurashi and Wang]{alqurashi2019clustering}
Alqurashi, T. and Wang, W.
\newblock Clustering ensemble method.
\newblock \emph{International Journal of Machine Learning and Cybernetics},
  10\penalty0 (6):\penalty0 1227--1246, 2019.

\bibitem[Ashtiani et~al.(2016)Ashtiani, Kushagra, and
  Ben-David]{ashtiani2016clustering}
Ashtiani, H., Kushagra, S., and Ben-David, S.
\newblock Clustering with same-cluster queries.
\newblock In \emph{Advances in neural information processing systems}, pp.\
  3216--3224, 2016.

\bibitem[Awasthi et~al.(2017)Awasthi, Balcan, and Voevodski]{awasthi2017local}
Awasthi, P., Balcan, M.~F., and Voevodski, K.
\newblock Local algorithms for interactive clustering.
\newblock \emph{The Journal of Machine Learning Research}, 18\penalty0
  (1):\penalty0 75--109, 2017.

\bibitem[Ayad \& Kamel(2010)Ayad and Kamel]{ayad2010voting}
Ayad, H.~G. and Kamel, M.~S.
\newblock On voting-based consensus of cluster ensembles.
\newblock \emph{Pattern Recognition}, 43\penalty0 (5):\penalty0 1943--1953,
  2010.

\bibitem[Balcan \& Blum(2008)Balcan and Blum]{balcan2008clustering}
Balcan, M.-F. and Blum, A.
\newblock Clustering with interactive feedback.
\newblock In \emph{International Conference on Algorithmic Learning Theory},
  pp.\  316--328. Springer, 2008.

\bibitem[Barrese et~al.(2013)Barrese, Rao, Paroo, Triebwasser, Vargas-Irwin,
  Franquemont, and Donoghue]{barrese2013failure}
Barrese, J.~C., Rao, N., Paroo, K., Triebwasser, C., Vargas-Irwin, C.,
  Franquemont, L., and Donoghue, J.~P.
\newblock Failure mode analysis of silicon-based intracortical microelectrode
  arrays in non-human primates.
\newblock \emph{Journal of neural engineering}, 10\penalty0 (6):\penalty0
  066014, 2013.

\bibitem[Blanche et~al.(2005)Blanche, Spacek, Hetke, and
  Swindale]{blanche2005polytrodes}
Blanche, T.~J., Spacek, M.~A., Hetke, J.~F., and Swindale, N.~V.
\newblock Polytrodes: high-density silicon electrode arrays for large-scale
  multiunit recording.
\newblock \emph{Journal of neurophysiology}, 93\penalty0 (5):\penalty0
  2987--3000, 2005.

\bibitem[Buccino \& Einevoll(2019)Buccino and Einevoll]{Buccino_mearec}
Buccino, A.~P. and Einevoll, G.~T.
\newblock Mearec: a fast and customizable testbench simulator for ground-truth
  extracellular spiking activity.
\newblock \emph{bioRxiv}, 2019.
\newblock \doi{10.1101/691642}.
\newblock URL \url{https://www.biorxiv.org/content/early/2019/07/03/691642}.

\bibitem[Calabrese \& Paninski(2011)Calabrese and
  Paninski]{calabrese2011kalman}
Calabrese, A. and Paninski, L.
\newblock Kalman filter mixture model for spike sorting of non-stationary data.
\newblock \emph{Journal of neuroscience methods}, 196\penalty0 (1):\penalty0
  159--169, 2011.

\bibitem[Campbell et~al.(2000)Campbell, Cristianini, Smola,
  et~al.]{campbell2000query}
Campbell, C., Cristianini, N., Smola, A., et~al.
\newblock Query learning with large margin classifiers.
\newblock In \emph{ICML}, volume~20, pp.\ ~0, 2000.

\bibitem[Cao et~al.(2011)Cao, Gotz, Sun, and Qu]{cao2011dicon}
Cao, N., Gotz, D., Sun, J., and Qu, H.
\newblock Dicon: Interactive visual analysis of multidimensional clusters.
\newblock \emph{IEEE transactions on visualization and computer graphics},
  17\penalty0 (12):\penalty0 2581--2590, 2011.

\bibitem[Carlson et~al.(2013)Carlson, Vogelstein, Wu, Lian, Zhou, Stoetzner,
  Kipke, Weber, Dunson, and Carin]{carlson2013multichannel}
Carlson, D.~E., Vogelstein, J.~T., Wu, Q., Lian, W., Zhou, M., Stoetzner,
  C.~R., Kipke, D., Weber, D., Dunson, D.~B., and Carin, L.
\newblock Multichannel electrophysiological spike sorting via joint dictionary
  learning and mixture modeling.
\newblock \emph{IEEE Transactions on Biomedical Engineering}, 61\penalty0
  (1):\penalty0 41--54, 2013.

\bibitem[Cavallo \& Demiralp(2018)Cavallo and
  Demiralp]{cavallo2018clustrophile}
Cavallo, M. and Demiralp, {\c{C}}.
\newblock Clustrophile 2: guided visual clustering analysis.
\newblock \emph{IEEE transactions on visualization and computer graphics},
  25\penalty0 (1):\penalty0 267--276, 2018.

\bibitem[Chatziafratis et~al.(2018)Chatziafratis, Niazadeh, and
  Charikar]{chatziafratis2018hierarchical}
Chatziafratis, V., Niazadeh, R., and Charikar, M.
\newblock Hierarchical clustering with structural constraints.
\newblock \emph{arXiv preprint arXiv:1805.09476}, 2018.

\bibitem[Chen et~al.(2014)Chen, Jalali, Sanghavi, and Xu]{chen2014clustering}
Chen, Y., Jalali, A., Sanghavi, S., and Xu, H.
\newblock Clustering partially observed graphs via convex optimization.
\newblock \emph{The Journal of Machine Learning Research}, 15\penalty0
  (1):\penalty0 2213--2238, 2014.

\bibitem[Chung et~al.(2017)Chung, Magland, Barnett, Tolosa, Tooker, Lee, Shah,
  Felix, Frank, and Greengard]{chung2017fully}
Chung, J.~E., Magland, J.~F., Barnett, A.~H., Tolosa, V.~M., Tooker, A.~C.,
  Lee, K.~Y., Shah, K.~G., Felix, S.~H., Frank, L.~M., and Greengard, L.~F.
\newblock A fully automated approach to spike sorting.
\newblock \emph{Neuron}, 95\penalty0 (6):\penalty0 1381--1394, 2017.

\bibitem[Chung et~al.(2019)Chung, Joo, Fan, Liu, Barnett, Chen,
  Geaghan-Breiner, Karlsson, Karlsson, Lee, et~al.]{chung2019high}
Chung, J.~E., Joo, H.~R., Fan, J.~L., Liu, D.~F., Barnett, A.~H., Chen, S.,
  Geaghan-Breiner, C., Karlsson, M.~P., Karlsson, M., Lee, K.~Y., et~al.
\newblock High-density, long-lasting, and multi-region electrophysiological
  recordings using polymer electrode arrays.
\newblock \emph{Neuron}, 101\penalty0 (1):\penalty0 21--31, 2019.

\bibitem[Dasarathy et~al.(2015)Dasarathy, Nowak, and Zhu]{dasarathy2015s2}
Dasarathy, G., Nowak, R., and Zhu, X.
\newblock S2: An efficient graph based active learning algorithm with
  application to nonparametric classification.
\newblock In \emph{Conference on Learning Theory}, pp.\  503--522, 2015.

\bibitem[Dasgupta \& Hsu(2008)Dasgupta and Hsu]{dasgupta2008hierarchical}
Dasgupta, S. and Hsu, D.
\newblock Hierarchical sampling for active learning.
\newblock In \emph{Proceedings of the 25th international conference on Machine
  learning}, pp.\  208--215. ACM, 2008.

\bibitem[Dhawale et~al.(2017)Dhawale, Poddar, Wolff, Normand, Kopelowitz, and
  {\"O}lveczky]{dhawale2017automated}
Dhawale, A.~K., Poddar, R., Wolff, S.~B., Normand, V.~A., Kopelowitz, E., and
  {\"O}lveczky, B.~P.
\newblock Automated long-term recording and analysis of neural activity in
  behaving animals.
\newblock \emph{Elife}, 6:\penalty0 e27702, 2017.

\bibitem[Dudoit \& Fridlyand(2003)Dudoit and Fridlyand]{dudoit2003bagging}
Dudoit, S. and Fridlyand, J.
\newblock Bagging to improve the accuracy of a clustering procedure.
\newblock \emph{Bioinformatics}, 19\penalty0 (9):\penalty0 1090--1099, 2003.

\bibitem[Eriksson et~al.(2011)Eriksson, Dasarathy, Singh, and
  Nowak]{eriksson2011active}
Eriksson, B., Dasarathy, G., Singh, A., and Nowak, R.
\newblock Active clustering: Robust and efficient hierarchical clustering using
  adaptively selected similarities.
\newblock In \emph{Proceedings of the Fourteenth International Conference on
  Artificial Intelligence and Statistics}, pp.\  260--268, 2011.

\bibitem[Fern \& Brodley(2003)Fern and Brodley]{fern2003random}
Fern, X.~Z. and Brodley, C.~E.
\newblock Random projection for high dimensional data clustering: A cluster
  ensemble approach.
\newblock In \emph{Proceedings of the 20th international conference on machine
  learning (ICML-03)}, pp.\  186--193, 2003.

\bibitem[Fern \& Brodley(2004)Fern and Brodley]{fern2004solving}
Fern, X.~Z. and Brodley, C.~E.
\newblock Solving cluster ensemble problems by bipartite graph partitioning.
\newblock In \emph{Proceedings of the twenty-first international conference on
  Machine learning}, pp.\ ~36. ACM, 2004.

\bibitem[Fred \& Jain(2005)Fred and Jain]{fred2005combining}
Fred, A.~L. and Jain, A.~K.
\newblock Combining multiple clusterings using evidence accumulation.
\newblock \emph{IEEE transactions on pattern analysis and machine
  intelligence}, 27\penalty0 (6):\penalty0 835--850, 2005.

\bibitem[Gentile et~al.(2019)Gentile, Vitale, and
  Rajagopalan]{gentile2019flattening}
Gentile, C., Vitale, F., and Rajagopalan, A.
\newblock Flattening a hierarchical clustering through active learning.
\newblock \emph{arXiv preprint arXiv:1906.09458}, 2019.

\bibitem[Gilletti \& Muthuswamy(2006)Gilletti and
  Muthuswamy]{gilletti2006brain}
Gilletti, A. and Muthuswamy, J.
\newblock Brain micromotion around implants in the rodent somatosensory cortex.
\newblock \emph{Journal of neural engineering}, 3\penalty0 (3):\penalty0 189,
  2006.

\bibitem[Gionis et~al.(2007)Gionis, Mannila, and
  Tsaparas]{gionis2007clustering}
Gionis, A., Mannila, H., and Tsaparas, P.
\newblock Clustering aggregation.
\newblock \emph{ACM Transactions on Knowledge Discovery from Data (TKDD)},
  1\penalty0 (1):\penalty0 4, 2007.

\bibitem[Gokhale et~al.(2014)Gokhale, Das, Doan, Naughton, Rampalli, Shavlik,
  and Zhu]{gokhale2014corleone}
Gokhale, C., Das, S., Doan, A., Naughton, J.~F., Rampalli, N., Shavlik, J., and
  Zhu, X.
\newblock Corleone: hands-off crowdsourcing for entity matching.
\newblock In \emph{Proceedings of the 2014 ACM SIGMOD international conference
  on Management of data}, pp.\  601--612. ACM, 2014.

\bibitem[Harris et~al.(2001)Harris, Hirase, Leinekugel, Henze, and
  Buzs{\'a}ki]{harris2001temporal}
Harris, K.~D., Hirase, H., Leinekugel, X., Henze, D.~A., and Buzs{\'a}ki, G.
\newblock Temporal interaction between single spikes and complex spike bursts
  in hippocampal pyramidal cells.
\newblock \emph{Neuron}, 32\penalty0 (1):\penalty0 141--149, 2001.

\bibitem[Hofmann \& Buhmann(1998)Hofmann and Buhmann]{hofmann1998active}
Hofmann, T. and Buhmann, J.~M.
\newblock Active data clustering.
\newblock In \emph{Advances in Neural Information Processing Systems}, pp.\
  528--534, 1998.

\bibitem[Hong \& Lieber(2019)Hong and Lieber]{hong2019novel}
Hong, G. and Lieber, C.~M.
\newblock Novel electrode technologies for neural recordings.
\newblock \emph{Nature Reviews Neuroscience}, pp.\ ~1, 2019.

\bibitem[Huang et~al.(2015)Huang, Lai, and Wang]{huang2015robust}
Huang, D., Lai, J.-H., and Wang, C.-D.
\newblock Robust ensemble clustering using probability trajectories.
\newblock \emph{IEEE transactions on knowledge and data engineering},
  28\penalty0 (5):\penalty0 1312--1326, 2015.

\bibitem[Huang \& Lam(2007)Huang and Lam]{huang2007semi}
Huang, R. and Lam, W.
\newblock Semi-supervised document clustering via active learning with pairwise
  constraints.
\newblock In \emph{Seventh IEEE International Conference on Data Mining (ICDM
  2007)}, pp.\  517--522. IEEE, 2007.

\bibitem[Hurwitz et~al.(2019)Hurwitz, Xu, Srivastava, Buccino, and
  Hennig]{Hurwitz_NIPS2019}
Hurwitz, C., Xu, K., Srivastava, A., Buccino, A., and Hennig, M.
\newblock Scalable spike source localization in extracellular recordings using
  amortized variational inference.
\newblock In Wallach, H., Larochelle, H., Beygelzimer, A., d~Alch\'{e}-Buc, F.,
  Fox, E., and Garnett, R. (eds.), \emph{Advances in Neural Information
  Processing Systems 32}, pp.\  4726--4738. Curran Associates, Inc., 2019.

\bibitem[Iam-On et~al.(2011)Iam-On, Boongoen, Garrett, and Price]{iam2011link}
Iam-On, N., Boongoen, T., Garrett, S., and Price, C.
\newblock A link-based approach to the cluster ensemble problem.
\newblock \emph{IEEE transactions on pattern analysis and machine
  intelligence}, 33\penalty0 (12):\penalty0 2396--2409, 2011.

\bibitem[Karypis \& Kumar(1998)Karypis and Kumar]{karypis1998fast}
Karypis, G. and Kumar, V.
\newblock A fast and high quality multilevel scheme for partitioning irregular
  graphs.
\newblock \emph{SIAM Journal on scientific Computing}, 20\penalty0
  (1):\penalty0 359--392, 1998.

\bibitem[Krishnamurthy et~al.(2012)Krishnamurthy, Balakrishnan, Xu, and
  Singh]{krishnamurthy2012efficient}
Krishnamurthy, A., Balakrishnan, S., Xu, M., and Singh, A.
\newblock Efficient active algorithms for hierarchical clustering.
\newblock \emph{arXiv preprint arXiv:1206.4672}, 2012.

\bibitem[Lewis \& Gale(1994)Lewis and Gale]{lewis1994sequential}
Lewis, D.~D. and Gale, W.~A.
\newblock A sequential algorithm for training text classifiers.
\newblock In \emph{SIGIR94}, pp.\  3--12. Springer, 1994.

\bibitem[Lex et~al.(2010)Lex, Streit, Partl, Kashofer, and
  Schmalstieg]{lex2010comparative}
Lex, A., Streit, M., Partl, C., Kashofer, K., and Schmalstieg, D.
\newblock Comparative analysis of multidimensional, quantitative data.
\newblock \emph{IEEE Transactions on Visualization and Computer Graphics},
  16\penalty0 (6):\penalty0 1027--1035, 2010.

\bibitem[Lex et~al.(2012)Lex, Streit, Schulz, Partl, Schmalstieg, Park, and
  Gehlenborg]{lex2012stratomex}
Lex, A., Streit, M., Schulz, H.-J., Partl, C., Schmalstieg, D., Park, P.~J.,
  and Gehlenborg, N.
\newblock Stratomex: visual analysis of large-scale heterogeneous genomics data
  for cancer subtype characterization.
\newblock In \emph{Computer graphics forum}, volume~31, pp.\  1175--1184. Wiley
  Online Library, 2012.

\bibitem[Li et~al.(2019)Li, Tang, and de~Sa]{lisupervised}
Li, Y., Tang, S., and de~Sa, V.~R.
\newblock Supervised spike sorting using deep convolutional siamese network and
  hierarchical clustering.
\newblock unpublished thesis, 2019.

\bibitem[Lipor \& Balzano(2017)Lipor and Balzano]{lipor2017leveraging}
Lipor, J. and Balzano, L.
\newblock Leveraging union of subspace structure to improve constrained
  clustering.
\newblock In \emph{Proceedings of the 34th International Conference on Machine
  Learning-Volume 70}, pp.\  2130--2139. JMLR. org, 2017.

\bibitem[Luan et~al.(2017)Luan, Wei, Zhao, Siegel, Potnis, Tuppen, Lin, Kazmi,
  Fowler, Holloway, et~al.]{luan2017ultraflexible}
Luan, L., Wei, X., Zhao, Z., Siegel, J.~J., Potnis, O., Tuppen, C.~A., Lin, S.,
  Kazmi, S., Fowler, R.~A., Holloway, S., et~al.
\newblock Ultraflexible nanoelectronic probes form reliable, glial scar--free
  neural integration.
\newblock \emph{Science Advances}, 3\penalty0 (2):\penalty0 e1601966, 2017.

\bibitem[L'Yi et~al.(2015)L'Yi, Ko, Shin, Cho, Lee, Kim, and Seo]{l2015xclusim}
L'Yi, S., Ko, B., Shin, D., Cho, Y.-J., Lee, J., Kim, B., and Seo, J.
\newblock Xclusim: a visual analytics tool for interactively comparing multiple
  clustering results of bioinformatics data.
\newblock \emph{BMC bioinformatics}, 16\penalty0 (11):\penalty0 S5, 2015.

\bibitem[Mallapragada et~al.(2008)Mallapragada, Jin, and
  Jain]{mallapragada2008active}
Mallapragada, P.~K., Jin, R., and Jain, A.~K.
\newblock Active query selection for semi-supervised clustering.
\newblock In \emph{2008 19th International Conference on Pattern Recognition},
  pp.\  1--4. IEEE, 2008.

\bibitem[Mazumdar \& Saha(2017{\natexlab{a}})Mazumdar and
  Saha]{mazumdar2017clustering}
Mazumdar, A. and Saha, B.
\newblock Clustering with noisy queries.
\newblock In \emph{Advances in Neural Information Processing Systems}, pp.\
  5788--5799, 2017{\natexlab{a}}.

\bibitem[Mazumdar \& Saha(2017{\natexlab{b}})Mazumdar and
  Saha]{mazumdar2017theoretical}
Mazumdar, A. and Saha, B.
\newblock A theoretical analysis of first heuristics of crowdsourced entity
  resolution.
\newblock In \emph{Thirty-First AAAI Conference on Artificial Intelligence},
  2017{\natexlab{b}}.

\bibitem[Mimaroglu \& Aksehirli(2011)Mimaroglu and
  Aksehirli]{mimaroglu2011diclens}
Mimaroglu, S. and Aksehirli, E.
\newblock Diclens: Divisive clustering ensemble with automatic cluster number.
\newblock \emph{IEEE/ACM transactions on computational biology and
  bioinformatics}, 9\penalty0 (2):\penalty0 408--420, 2011.

\bibitem[Monti et~al.(2003)Monti, Tamayo, Mesirov, and
  Golub]{monti2003consensus}
Monti, S., Tamayo, P., Mesirov, J., and Golub, T.
\newblock Consensus clustering: a resampling-based method for class discovery
  and visualization of gene expression microarray data.
\newblock \emph{Machine learning}, 52\penalty0 (1-2):\penalty0 91--118, 2003.

\bibitem[Nam et~al.(2007)Nam, Han, Mueller, Zelenyuk, and
  Imre]{nam2007clustersculptor}
Nam, E.~J., Han, Y., Mueller, K., Zelenyuk, A., and Imre, D.
\newblock Clustersculptor: A visual analytics tool for high-dimensional data.
\newblock In \emph{2007 IEEE Symposium on Visual Analytics Science and
  Technology}, pp.\  75--82. IEEE, 2007.

\bibitem[Nguyen \& Smeulders(2004)Nguyen and Smeulders]{nguyen2004active}
Nguyen, H.~T. and Smeulders, A.
\newblock Active learning using pre-clustering.
\newblock In \emph{Proceedings of the twenty-first international conference on
  Machine learning}, pp.\ ~79. ACM, 2004.

\bibitem[Pachitariu et~al.(2016)Pachitariu, Steinmetz, Kadir, Carandini, and
  Harris]{pachitariu2016fast}
Pachitariu, M., Steinmetz, N.~A., Kadir, S.~N., Carandini, M., and Harris,
  K.~D.
\newblock Fast and accurate spike sorting of high-channel count probes with
  kilosort.
\newblock In \emph{Advances in Neural Information Processing Systems}, pp.\
  4448--4456, 2016.

\bibitem[Pachitariu et~al.(2018)Pachitariu, Stringer, Steinmetz, Carandini, and
  Harris]{Pachitariu2018drift}
Pachitariu, M., Stringer, C., Steinmetz, N., Carandini, M., and Harris, K.
\newblock {Drift correction for electrophysiology and two-photon calcium
  imaging}.
\newblock 3 2018.
\newblock \doi{10.25378/janelia.5946574.v1}.
\newblock URL
  \url{https://janelia.figshare.com/articles/Drift_correction_for_electrophysiology_and_two-photon_calcium_imaging/5946574}.

\bibitem[Pilh{\"o}fer et~al.(2012)Pilh{\"o}fer, Gribov, and
  Unwin]{pilhofer2012comparing}
Pilh{\"o}fer, A., Gribov, A., and Unwin, A.
\newblock Comparing clusterings using bertin's idea.
\newblock \emph{IEEE Transactions on Visualization and Computer Graphics},
  18\penalty0 (12):\penalty0 2506--2515, 2012.

\bibitem[Quirk et~al.(2001)Quirk, Blum, and Wilson]{quirk2001experience}
Quirk, M.~C., Blum, K.~I., and Wilson, M.~A.
\newblock Experience-dependent changes in extracellular spike amplitude may
  reflect regulation of dendritic action potential back-propagation in rat
  hippocampal pyramidal cells.
\newblock \emph{Journal of Neuroscience}, 21\penalty0 (1):\penalty0 240--248,
  2001.

\bibitem[Ramaswamy et~al.(2015)Ramaswamy, Courcol, Abdellah, Adaszewski,
  Antille, Arsever, Atenekeng, Bilgili, Brukau, Chalimourda,
  et~al.]{ramaswamy2015neocortical}
Ramaswamy, S., Courcol, J.-D., Abdellah, M., Adaszewski, S.~R., Antille, N.,
  Arsever, S., Atenekeng, G., Bilgili, A., Brukau, Y., Chalimourda, A., et~al.
\newblock The neocortical microcircuit collaboration portal: a resource for rat
  somatosensory cortex.
\newblock \emph{Frontiers in neural circuits}, 9:\penalty0 44, 2015.

\bibitem[Rodriguez \& Laio(2014)Rodriguez and Laio]{rodriguez2014clustering}
Rodriguez, A. and Laio, A.
\newblock Clustering by fast search and find of density peaks.
\newblock \emph{Science}, 344\penalty0 (6191):\penalty0 1492--1496, 2014.

\bibitem[Rossant et~al.(2016)Rossant, Kadir, Goodman, Schulman, Hunter, Saleem,
  Grosmark, Belluscio, Denfield, Ecker, et~al.]{rossant2016spike}
Rossant, C., Kadir, S.~N., Goodman, D.~F., Schulman, J., Hunter, M.~L., Saleem,
  A.~B., Grosmark, A., Belluscio, M., Denfield, G.~H., Ecker, A.~S., et~al.
\newblock Spike sorting for large, dense electrode arrays.
\newblock \emph{Nature neuroscience}, 19\penalty0 (4):\penalty0 634, 2016.

\bibitem[Rousche \& Normann(1992)Rousche and Normann]{rousche1992method}
Rousche, P.~J. and Normann, R.~A.
\newblock A method for pneumatically inserting an array of penetrating
  electrodes into cortical tissue.
\newblock \emph{Annals of biomedical engineering}, 20\penalty0 (4):\penalty0
  413--422, 1992.

\bibitem[Sacha et~al.(2017)Sacha, Kraus, Bernard, Behrisch, Schreck, Asano, and
  Keim]{sacha2017somflow}
Sacha, D., Kraus, M., Bernard, J., Behrisch, M., Schreck, T., Asano, Y., and
  Keim, D.~A.
\newblock Somflow: Guided exploratory cluster analysis with self-organizing
  maps and analytic provenance.
\newblock \emph{IEEE transactions on visualization and computer graphics},
  24\penalty0 (1):\penalty0 120--130, 2017.

\bibitem[Schreck et~al.(2009)Schreck, Bernard, Von~Landesberger, and
  Kohlhammer]{schreck2009visual}
Schreck, T., Bernard, J., Von~Landesberger, T., and Kohlhammer, J.
\newblock Visual cluster analysis of trajectory data with interactive kohonen
  maps.
\newblock \emph{Information Visualization}, 8\penalty0 (1):\penalty0 14--29,
  2009.

\bibitem[Sch{\"u}tze et~al.(2006)Sch{\"u}tze, Velipasaoglu, and
  Pedersen]{schutze2006performance}
Sch{\"u}tze, H., Velipasaoglu, E., and Pedersen, J.~O.
\newblock Performance thresholding in practical text classification.
\newblock In \emph{Proceedings of the 15th ACM international conference on
  Information and knowledge management}, pp.\  662--671. ACM, 2006.

\bibitem[Seo \& Shneiderman(2002)Seo and Shneiderman]{seo2002interactively}
Seo, J. and Shneiderman, B.
\newblock Interactively exploring hierarchical clustering results [gene
  identification].
\newblock \emph{Computer}, 35\penalty0 (7):\penalty0 80--86, 2002.

\bibitem[Shamir \& Tishby(2011)Shamir and Tishby]{shamir2011spectral}
Shamir, O. and Tishby, N.
\newblock Spectral clustering on a budget.
\newblock In \emph{Proceedings of the Fourteenth International Conference on
  Artificial Intelligence and Statistics}, pp.\  661--669, 2011.

\bibitem[Song et~al.(2018)Song, Flores, and Ba]{song2018spike}
Song, A.~H., Flores, F., and Ba, D.
\newblock Spike sorting by convolutional dictionary learning.
\newblock \emph{arXiv preprint arXiv:1806.01979}, 2018.

\bibitem[Strehl \& Ghosh(2002)Strehl and Ghosh]{strehl2002cluster}
Strehl, A. and Ghosh, J.
\newblock Cluster ensembles---a knowledge reuse framework for combining
  multiple partitions.
\newblock \emph{Journal of machine learning research}, 3\penalty0
  (Dec):\penalty0 583--617, 2002.

\bibitem[Subbaroyan et~al.(2005)Subbaroyan, Martin, and
  Kipke]{subbaroyan2005finite}
Subbaroyan, J., Martin, D.~C., and Kipke, D.~R.
\newblock A finite-element model of the mechanical effects of implantable
  microelectrodes in the cerebral cortex.
\newblock \emph{Journal of neural engineering}, 2\penalty0 (4):\penalty0 103,
  2005.

\bibitem[Szarowski et~al.(2003)Szarowski, Andersen, Retterer, Spence, Isaacson,
  Craighead, Turner, and Shain]{szarowski2003brain}
Szarowski, D., Andersen, M., Retterer, S., Spence, A., Isaacson, M., Craighead,
  H., Turner, J., and Shain, W.
\newblock Brain responses to micro-machined silicon devices.
\newblock \emph{Brain research}, 983\penalty0 (1-2):\penalty0 23--35, 2003.

\bibitem[Takahashi et~al.(2002)Takahashi, Sakurai, Tsukada, and
  Anzai]{takahashi2002classification}
Takahashi, S., Sakurai, Y., Tsukada, M., and Anzai, Y.
\newblock Classification of neuronal activities from tetrode recordings using
  independent component analysis.
\newblock \emph{Neurocomputing}, 49\penalty0 (1-4):\penalty0 289--298, 2002.

\bibitem[Takekawa et~al.(2012)Takekawa, Isomura, and Fukai]{takekawa2012spike}
Takekawa, T., Isomura, Y., and Fukai, T.
\newblock Spike sorting of heterogeneous neuron types by multimodality-weighted
  pca and explicit robust variational bayes.
\newblock \emph{Frontiers in neuroinformatics}, 6:\penalty0 5, 2012.

\bibitem[Tinevez(2020)]{tinevez}
Tinevez, J.-Y.
\newblock Tree data structure as a matlab class, 2020.
\newblock URL \url{https://www.github.com/tinevez/matlab-tree}.

\bibitem[Tong \& Chang(2001)Tong and Chang]{tong2001support}
Tong, S. and Chang, E.
\newblock Support vector machine active learning for image retrieval.
\newblock In \emph{Proceedings of the ninth ACM international conference on
  Multimedia}, pp.\  107--118. ACM, 2001.

\bibitem[Vega-Pons \& Ruiz-Shulcloper(2011)Vega-Pons and
  Ruiz-Shulcloper]{vega2011survey}
Vega-Pons, S. and Ruiz-Shulcloper, J.
\newblock A survey of clustering ensemble algorithms.
\newblock \emph{International Journal of Pattern Recognition and Artificial
  Intelligence}, 25\penalty0 (03):\penalty0 337--372, 2011.

\bibitem[Vesdapunt et~al.(2014)Vesdapunt, Bellare, and
  Dalvi]{vesdapunt2014crowdsourcing}
Vesdapunt, N., Bellare, K., and Dalvi, N.
\newblock Crowdsourcing algorithms for entity resolution.
\newblock \emph{Proceedings of the VLDB Endowment}, 7\penalty0 (12):\penalty0
  1071--1082, 2014.

\bibitem[Vinh et~al.(2009)Vinh, Epps, and Bailey]{vinh2009icml}
Vinh, N.~X., Epps, J., and Bailey, J.
\newblock Information theoretic measures for clusterings comparison: is a
  correction for chance necessary?
\newblock In \emph{Proceedings of the 26th annual international conference on
  machine learning}, pp.\  1073--1080, 2009.

\bibitem[Vinh et~al.(2010)Vinh, Epps, and Bailey]{vinh2010JMLR}
Vinh, N.~X., Epps, J., and Bailey, J.
\newblock Information theoretic measures for clusterings comparison: Variants,
  properties, normalization and correction for chance.
\newblock \emph{Journal of Machine Learning Research}, 11\penalty0
  (Oct):\penalty0 2837--2854, 2010.

\bibitem[Voiron et~al.(2016)Voiron, Benoit, Lambert, and
  Ionescu]{voiron2016deep}
Voiron, N., Benoit, A., Lambert, P., and Ionescu, B.
\newblock Deep learning vs spectral clustering into an active clustering with
  pairwise constraints propagation.
\newblock In \emph{2016 14th international workshop on content-based multimedia
  indexing (CBMI)}, pp.\  1--6. IEEE, 2016.

\bibitem[Wang et~al.(2012)Wang, Kraska, Franklin, and Feng]{wang2012crowder}
Wang, J., Kraska, T., Franklin, M.~J., and Feng, J.
\newblock Crowder: Crowdsourcing entity resolution.
\newblock \emph{Proceedings of the VLDB Endowment}, 5\penalty0 (11):\penalty0
  1483--1494, 2012.

\bibitem[Wang et~al.(2011)Wang, Leckie, Kotagiri, and
  Bezdek]{wang2011approximate}
Wang, L., Leckie, C., Kotagiri, R., and Bezdek, J.
\newblock Approximate pairwise clustering for large data sets via sampling plus
  extension.
\newblock \emph{Pattern Recognition}, 44\penalty0 (2):\penalty0 222--235, 2011.

\bibitem[Wauthier et~al.(2012)Wauthier, Jojic, and Jordan]{wauthier2012active}
Wauthier, F.~L., Jojic, N., and Jordan, M.~I.
\newblock Active spectral clustering via iterative uncertainty reduction.
\newblock In \emph{Proceedings of the 18th ACM SIGKDD international conference
  on Knowledge discovery and data mining}, pp.\  1339--1347. ACM, 2012.

\bibitem[Wiwie et~al.(2015)Wiwie, Baumbach, and
  R{\"o}ttger]{wiwie2015comparing}
Wiwie, C., Baumbach, J., and R{\"o}ttger, R.
\newblock Comparing the performance of biomedical clustering methods.
\newblock \emph{Nature methods}, 12\penalty0 (11):\penalty0 1033, 2015.

\bibitem[Xiong et~al.(2012)Xiong, Johnson, and Corso]{xiong2012online}
Xiong, C., Johnson, D.~M., and Corso, J.~J.
\newblock Online active constraint selection for semi-supervised clustering.
\newblock In \emph{ECAI 2012 AIL Workshop}, 2012.

\bibitem[Xiong et~al.(2016)Xiong, Johnson, and Corso]{xiong2016active}
Xiong, C., Johnson, D.~M., and Corso, J.~J.
\newblock Active clustering with model-based uncertainty reduction.
\newblock \emph{IEEE transactions on pattern analysis and machine
  intelligence}, 39\penalty0 (1):\penalty0 5--17, 2016.

\bibitem[Xu et~al.(2003)Xu, Yu, Tresp, Xu, and Wang]{zhao2003representative}
Xu, Z., Yu, K., Tresp, V., Xu, X., and Wang, J.
\newblock Representative sampling for text classification using support vector
  machines.
\newblock \emph{Advances in Information Retrieval: 25th European Conf on IR
  Research ECIR 2003: 2003; Italy}, 2633:\penalty0 11--11, 04 2003.
\newblock \doi{10.1007/3-540-36618-0_28}.

\bibitem[Yger et~al.(2016)Yger, Spampinato, Esposito, Lefebvre, Deny, Gardella,
  Stimberg, Jetter, Zeck, Picaud, et~al.]{yger2016fast}
Yger, P., Spampinato, G.~L., Esposito, E., Lefebvre, B., Deny, S., Gardella,
  C., Stimberg, M., Jetter, F., Zeck, G., Picaud, S., et~al.
\newblock Fast and accurate spike sorting in vitro and in vivo for up to
  thousands of electrodes.
\newblock \emph{BioRxiv}, pp.\  067843, 2016.

\bibitem[Yger et~al.(2018)Yger, Spampinato, Esposito, Lefebvre, Deny, Gardella,
  Stimberg, Jetter, Zeck, Picaud, et~al.]{yger2018spike}
Yger, P., Spampinato, G.~L., Esposito, E., Lefebvre, B., Deny, S., Gardella,
  C., Stimberg, M., Jetter, F., Zeck, G., Picaud, S., et~al.
\newblock A spike sorting toolbox for up to thousands of electrodes validated
  with ground truth recordings in vitro and in vivo.
\newblock \emph{Elife}, 7:\penalty0 e34518, 2018.

\bibitem[Zhao et~al.(2017)Zhao, Luan, Wei, Zhu, Li, Lin, Siegel, Chitwood, and
  Xie]{zhao2017nanoletter}
Zhao, Z., Luan, L., Wei, X., Zhu, H., Li, X., Lin, S., Siegel, J.~J., Chitwood,
  R.~A., and Xie, C.
\newblock Nanoelectronic coating enabled versatile multifunctional neural
  probes.
\newblock \emph{Nano letters}, 17\penalty0 (8):\penalty0 4588--4595, 2017.

\bibitem[Zhou et~al.(2009)Zhou, Konecni, and Grinstein]{zhou2009visually}
Zhou, J., Konecni, S., and Grinstein, G.
\newblock Visually comparing multiple partitions of data with applications to
  clustering.
\newblock In \emph{Visualization and Data Analysis 2009}, volume 7243, pp.\
  72430J. International Society for Optics and Photonics, 2009.

\end{thebibliography}
\bibliographystyle{icml2020}

\end{document}